%% file: main.tex
\RequirePackage[svgnames]{xcolor}

\documentclass[11pt,letterpaper]{mystyle}

\usepackage[all]{hypcap}
\usepackage[svgnames]{xcolor}
\usepackage[comma,authoryear,compress]{natbib}
\hypersetup{
    colorlinks = true,
    linkcolor = ThemeBlue,
    citecolor = ThemeBlue,
    urlcolor = ThemeBlue,
}
\usepackage{iftex}
\ifPDFTeX
\else
    \usepackage{unicode-math}
\fi
\usepackage{algorithm}
\usepackage{algorithmicx}
\usepackage{algpseudocode}
\usepackage{microtype}
\usepackage{graphicx}
\usepackage{booktabs} %
\usepackage{float}
\usepackage{bigstrut}
\usepackage{booktabs}
\usepackage{amsmath}
\usepackage{amssymb}
\usepackage{mathtools}
\usepackage{amsthm}
\usepackage{nicefrac}
\usepackage{dsfont}
\usepackage{enumitem}
\usepackage{cleveref}
\usepackage{bxcoloremoji}
\usepackage{graphicx}
\usepackage{float}
\usepackage{xcolor}
\usepackage{booktabs}
\usepackage{xspace}
\usepackage[utf8]{inputenc} %
\usepackage[T1]{fontenc}    %
\usepackage{hyperref}       %
\usepackage{url}            %
\usepackage{booktabs}       %
\usepackage{amsfonts}       %
\usepackage{nicefrac}       %
\usepackage{microtype}      %
\usepackage{graphicx}
\usepackage{amssymb}
\usepackage{wrapfig}
\usepackage{lipsum}
\usepackage{enumitem}
\usepackage{stackengine}
\usepackage[font=small,labelfont=bf]{caption}
\usepackage{color}
\usepackage{adjustbox}
\usepackage{lipsum}

\usepackage{rotating}
\usepackage{makecell}
\usepackage{multirow}
\usepackage{booktabs}
\usepackage[table]{xcolor}

\input{macro}

\newtcolorbox{AIbox}[2][]{aibox,title=#2,#1}
\definecolor{lightblue}{rgb}{0.22,0.45,0.70}%
\definecolor{Gray}{gray}{0.95}
\definecolor{Cornsilk}{rgb}{1.0, 0.97, 0.86}

\usepackage{amsmath}

\usepackage[all]{hypcap}

\title{\ours: Deriving with Policy Optimization, Training with Self-Distillation}

\runningtitle{\ours: 
Deriving with Policy Optimization, Training with Self-Distillation}

\author[1,*]{Jiawei Xu}
\author[1,*]{Minghui Liu}
\author[1]{Juzheng Zhang}
\author[1]{Tom Goldstein}
\author[1]{Furong Huang}

\affil[1]{University of Maryland, College Park}
\equalcontribution{* Equal contribution and joint project leadership.}

\correspondingauthor{Furong Huang \href{https://furong-huang.com}{https://furong-huang.com}; Email \href{mailto:furongh@umd.edu}{furongh@umd.edu}}

\begin{document}

\input{sections/abstract}

\begingroup
\hbadness=10000
\maketitle
\endgroup
\vspace{2mm}

\input{figures/main_plot}
\input{sections/introduction}
\input{sections/method}
\input{sections/relatedwork}
\input{sections/experiments}
% \clearpage
\input{sections/conclusion}
\input{sections/acknowledge}
\clearpage
\bibliography{main}

\appendix
\input{sections/appendix}
\end{document}

%% file: macro.tex
\newcommand{\x}{\mathbf{x}}

\newcommand{\ours}{\textsc{$\beta$-OPSD}}
\newcommand{\oursbold}{\textbf{$\boldsymbol{\beta}$-OPSD}\xspace}

\newcommand{\posgain}[1]{{\scriptsize\textcolor{green!50!black}{(#1)}}}
\newcommand{\neggain}[1]{{\scriptsize\textcolor{red!70!black}{(#1)}}}

\definecolor{blanchedalmond}{rgb}{1.0, 0.92, 0.8}
\definecolor{carmine}{rgb}{0.59, 0.0, 0.09}
\definecolor{lightblue}{rgb}{0.22,0.45,0.70}%

\newtheorem{theorem}{Theorem}[section]

\newtheorem{proposition}[theorem]{Proposition}

\newtheorem{remark}[theorem]{Remark}

\renewcommand{\mathbf}{\boldsymbol}

\makeatletter
\def\Ddots{\mathinner{\mkern1mu\raise\p@
\vbox{\kern7\p@\hbox{.}}\mkern2mu
\raise4\p@\hbox{.}\mkern2mu\raise7\p@\hbox{.}\mkern1mu}}
\makeatother

\definecolor{amaranth}{rgb}{0.9, 0.17, 0.31}
\definecolor{antiquebrass}{rgb}{0.8, 0.58, 0.46}
\definecolor{antiquefuchsia}{rgb}{0.57, 0.36, 0.51}
\definecolor{chromeyellow}{rgb}{0.31, 0.47, 0.26}

%% file: sections/abstract.tex
\begin{abstract}

On-policy self-distillation (OPSD) is a promising approach to improve reasoning language
models, but it remains brittle in practice: making it work reliably often requires substantial
engineering effort. We identify a structural source of this difficulty: vanilla OPSD is
precisely the $\beta=1$ member of a broader policy-optimization family, where $\beta$ weights
the KL penalty anchoring the student to a reference policy. This equivalence turns $\beta$ from an implicit value
fixed at one into a controllable regularization parameter, yielding a
more general formulation that trades off proximity to a reference policy against privileged
teacher guidance. We introduce \oursbold and derive its optimal policy as a geometric
interpolation between the reference policy and the privileged teacher. Directly optimizing
this objective with reinforcement learning, however, would be costly and high-variance.
Rather than optimize the RL objective directly, we turn its closed-form solution into a
distillation target. Each value of $\beta$ selects a target along the reference-to-teacher
path, which we implement efficiently by mixing their token-level logits. In this way,
inexpensive distillation approximates the solution of
expensive policy optimization. Return-to-go credit assignment further aligns token updates
with the sequence-level objective while retaining the simplicity of OPSD. Experiments on
mathematical reasoning benchmarks show that \ours{} consistently
outperforms vanilla OPSD, improving optimization stability and downstream reasoning
performance. Our results provide a principled route from self-distillation to policy
optimization and back---without sacrificing the efficiency that makes OPSD practical.

\end{abstract}

%% file: figures/main_plot.tex
\begin{figure*}[htbp]
    \centering
    \includegraphics[width=\textwidth,trim={0 100bp 25bp 0},clip]{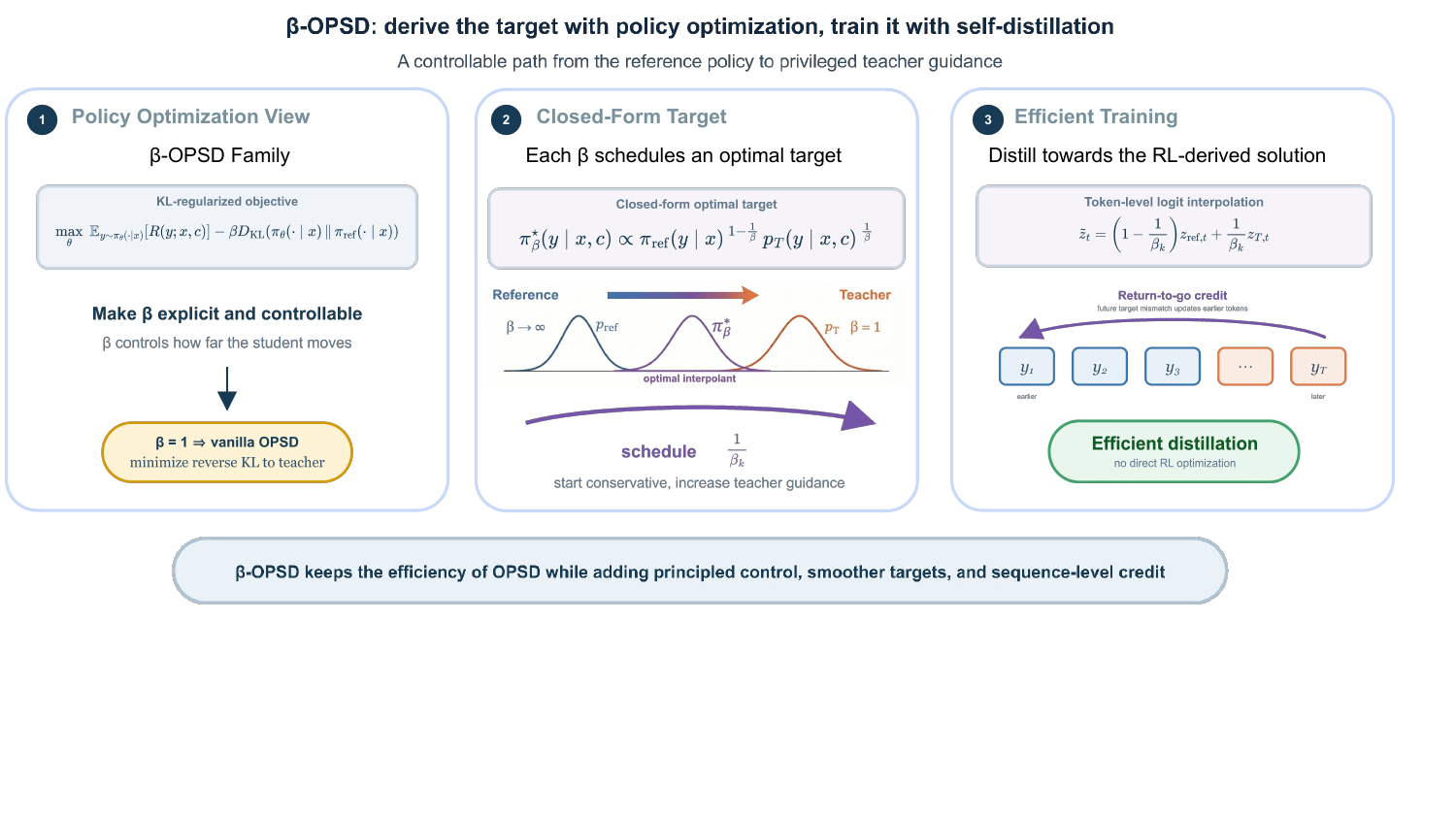}
    \caption{\textbf{Overview of \oursbold{}}. Vanilla OPSD is the $\beta=1$ case of a broader KL-regularized objective, whose optimal policy is a geometric interpolant between the reference policy and the privileged teacher. $\beta$-OPSD turns this optimal policy into a practical distillation target through scheduled logit interpolation, while retaining efficient token-level training with return-to-go credit assignment.
    }
    \label{fig:main_plot}
\end{figure*}

%% file: sections/introduction.tex
\vspace{-4mm}
\section{Introduction}

Reasoning performance of large language models has advanced rapidly through reinforcement
learning and self-improvement techniques
\citep{shao2024deepseekmath, ouyang2022training, guo2025deepseek}. Among these approaches,
on-policy self-distillation (OPSD) is a promising way to improve reasoning: the student learns
from its own sampled trajectories while a teacher supplies supervision using privileged
information available only during training, such as reference solutions, verified reasoning
traces, or external feedback
\citep{zhao2026self, shenfeld2026self, hubotter2026reinforcement, li2026rethinking,
jang2026stable, yang2026self}. Unlike offline distillation, OPSD lets the teacher respond to
the student's actual intermediate states. Yet this promise has not produced a reliably
effective recipe; in practice, OPSD can be brittle and often demands substantial engineering.

Most existing OPSD methods begin from direct teacher imitation. Given a student-generated
trajectory, the privileged teacher provides token-level predictions, and the student minimizes
a reverse Kullback--Leibler (KL) divergence to those predictions
\citep{zhao2026self, jang2026stable, li2026rethinking, yang2026self}. This formulation fixes
the teacher as the unique optimization target and provides no explicit control over how
aggressively the student should move away from its current or initial policy. Is direct teacher
matching truly the only principled OPSD objective?

Our key observation is that it is not. Vanilla OPSD is precisely the $\beta=1$ member of a
broader KL-regularized policy-optimization family, where $\beta$ weights the KL penalty that
anchors the learned policy to a reference policy. This equivalence turns $\beta$ from an
implicit value fixed at one into an explicit control over the strength of teacher guidance.
It also changes the interpretation of OPSD: rather than a single teacher-imitation objective,
it is one point in a continuum of policy-optimization problems that trade off improvement
toward the privileged teacher against proximity to the reference policy.

This perspective leads to \oursbold, a generalized family of on-policy self-distillation
objectives parameterized by $\beta$. Vanilla OPSD is recovered exactly at $\beta=1$, while
larger values yield more conservative updates that remain closer to the reference policy.
The family therefore exposes a degree of freedom that vanilla OPSD leaves fixed and offers
a principled mechanism for addressing its brittleness: instead of moving abruptly toward
the teacher, the target can progress smoothly from the reference policy toward privileged
teacher guidance over the course of training.

The policy-optimization view also tells us what these targets should be. We derive the
optimal policy of \ours{} in closed form and show that it is a geometric interpolation
between the reference policy and the privileged teacher. Directly optimizing the resulting
RL objective, however, would incur costly rollouts, advantage estimation, and high-variance
gradients; the exact sequence-level optimum also contains an intractable normalizer. We
therefore use policy optimization for the derivation and return to distillation for training.
Each value of $\beta$ defines an RL-derived target along the reference-to-teacher path, which
we realize efficiently through token-level logit interpolation. Scheduling $\beta$ then gives
a smooth curriculum of targets while preserving the computational structure of OPSD. In
short, inexpensive distillation approximates the solution of expensive policy optimization.

Finally, matching the right target still requires assigning credit across the generated
trajectory. Standard token-local OPSD updates are myopic: an early token changes later
prefixes and therefore affects future student--target mismatch. A policy-gradient derivation
of the sequence-level objective yields return-to-go credit assignment, which propagates this
future mismatch back to earlier tokens. Combined with the interpolated target, this gives a
practical algorithm that retains token-level distillation while more faithfully optimizing the
underlying sequence-level objective.

We evaluate \ours{} on mathematical reasoning benchmarks using open-source reasoning
language models ranging from 1.7B to 8B parameters. Across multiple benchmarks, \ours{}
consistently improves optimization stability and reasoning performance over vanilla OPSD.
On Qwen3-1.7B \citep{yang2025qwen3}, it improves avg@12 by up to 9.16 percentage points
over vanilla OPSD across AIME 2024 \citep{aime24}, AIME 2025 \citep{aime25}, and HMMT
2025 \citep{dekoninck2026matharena}. Ablations further isolate the benefits of both the
RL-derived interpolation target and sequence-level return-to-go credit assignment.

Our contributions are summarized as follows: 
\begin{itemize} 
    \item We show that vanilla OPSD is precisely the $\beta=1$ member of a broader
    KL-regularized policy-optimization family, making its implicit regularization choice
    explicit and controllable.
    \item We introduce \oursbold and derive its closed-form optimal policy as a geometric
    interpolation between the reference policy and the privileged teacher.
    \item We turn the RL-derived solution back into an efficient distillation algorithm using
    scheduled logit interpolation and return-to-go credit assignment.
    \item We demonstrate consistent improvements over vanilla OPSD on mathematical
    reasoning benchmarks and validate the contribution of both practical components.
\end{itemize}

\label{sec:intro}
\vspace{-2mm}

%% file: sections/method.tex
\section{Method}
\label{sec:method}

This section develops \ours{} following the paper's central principle: derive the target
with policy optimization, then train it with self-distillation. We first show that vanilla OPSD
is precisely the $\beta=1$ member of a broader KL-regularized policy-optimization family,
making the strength of reference regularization explicit and controllable. We then derive the
family's optimal policy as a geometric interpolation between the reference policy and the
privileged teacher. Because direct RL optimization is costly and the exact sequence-level
target is intractable, we return to efficient distillation by realizing the target through
scheduled token-level logit interpolation. Finally, we derive return-to-go credit assignment
to align the practical token-level update with the underlying sequence-level objective.
Appendix~\ref{app:notation} provides a consolidated reference for the notation used
throughout the paper.

\subsection{Vanilla OPSD as \texorpdfstring{$\beta=1$}{beta=1} Policy Optimization}
\label{sec:beta_opsd}

Let $\pi_\theta(y\mid x)$ denote the student policy, $p_T(y\mid x,c)$ the privileged teacher,
and $\pi_{\mathrm{ref}}(y\mid x)$ a reference policy, such as the initial student or a
stop-gradient copy of the current student. We start from the standard KL-regularized RL
objective
\begin{equation}
    \max_{\theta}
    \;
    \mathbb{E}_{y\sim\pi_\theta(\cdot\mid x)}
    \left[
    R(y;x,c)
    \right]
    -
    \beta
    D_{\mathrm{KL}}
    \left(
    \pi_\theta(\cdot\mid x)
    \,\middle\|\,
    \pi_{\mathrm{ref}}(\cdot\mid x)
    \right),
    \label{eq:kl_regularized_rl}
\end{equation}
where $\beta>0$ controls how strongly the learned policy is constrained to remain close to
the reference.

To embed standard on-policy self-distillation within this policy-optimization family, we
choose the reward to be the teacher-to-reference log-ratio:
\begin{equation}
    R(y;x,c)
    =
    \log
    \frac{
    p_T(y\mid x,c)
    }{
    \pi_{\mathrm{ref}}(y\mid x)
    }.
    \label{eq:teacher_log_ratio_reward}
\end{equation}

This reward is high when a trajectory is more likely under the privileged teacher than under
the reference policy. Substituting Eq.~\ref{eq:teacher_log_ratio_reward} into
Eq.~\ref{eq:kl_regularized_rl} gives the family
\begin{equation}
    \mathcal{J}_{\beta}(\theta)
    =
    \mathbb{E}_{y\sim\pi_\theta(\cdot\mid x)}
    \left[
    \log
    \frac{
    p_T(y\mid x,c)
    }{
    \pi_{\mathrm{ref}}(y\mid x)
    }
    \right]
    -
    \beta
    D_{\mathrm{KL}}
    \left(
    \pi_\theta(\cdot\mid x)
    \,\middle\|\,
    \pi_{\mathrm{ref}}(\cdot\mid x)
    \right).
    \label{eq:beta_opsd_objective}
\end{equation}
We call this objective \emph{$\beta$-OPSD}. It generalizes standard on-policy
self-distillation by treating $\beta$ as a variable regularization parameter rather than
implicitly fixing it at one.
The log-ratio reward favors trajectories preferred by the privileged teacher over the
reference policy, while the KL term penalizes movement away from the reference; $\beta$
controls the trade-off between these two forces.

\begin{proposition}[OPSD is the $\beta=1$ case]
\label{prop:opsd_beta_one}
When $\beta=1$, maximizing Eq.~\ref{eq:beta_opsd_objective} is equivalent to minimizing
the standard sequence-level reverse KL,
\begin{equation}
    D_{\mathrm{KL}}
    \left(
    \pi_\theta(\cdot\mid x)
    \,\middle\|\,
    p_T(\cdot\mid x,c)
    \right).
    \label{eq:standard_opsd_kl}
\end{equation}
\end{proposition}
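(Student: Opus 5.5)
The plan is to expand both terms of Eq.~\ref{eq:beta_opsd_objective} as expectations under the same sampling distribution $y\sim\pi_\theta(\cdot\mid x)$ and simplify the integrand pointwise. Writing the KL term as $\mathbb{E}_{y\sim\pi_\theta}\bigl[\log \pi_\theta(y\mid x)-\log\pi_{\mathrm{ref}}(y\mid x)\bigr]$ and using linearity of expectation, the objective becomes
\begin{equation*}
\mathcal{J}_\beta(\theta)=\mathbb{E}_{y\sim\pi_\theta}\Bigl[\log p_T(y\mid x,c)-\log\pi_{\mathrm{ref}}(y\mid x)-\beta\log\pi_\theta(y\mid x)+\beta\log\pi_{\mathrm{ref}}(y\mid x)\Bigr].
\end{equation*}
Setting $\beta=1$, the two $\log\pi_{\mathrm{ref}}$ terms cancel exactly, leaving $\mathcal{J}_1(\theta)=\mathbb{E}_{y\sim\pi_\theta}\bigl[\log p_T(y\mid x,c)-\log\pi_\theta(y\mid x)\bigr]=-D_{\mathrm{KL}}\bigl(\pi_\theta(\cdot\mid x)\,\|\,p_T(\cdot\mid x,c)\bigr)$. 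Maximizing $\mathcal{J}_1$ is then literally minimizing Eq.~\ref{eq:standard_opsd_kl}, with the same value up to sign rather than merely the same maximizer.

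For general $\beta$ it is worth also recording, for the later closed-form result, that the same computation gives $\mathcal{J}_\beta=-\beta\,\mathbb{E}_{\pi_\theta}\bigl[\log\pi_\theta-\log(p_T^{1/\beta}\pi_{\mathrm{ref}}^{1-1/\beta})\bigr]$. This shows that the $\beta=1$ case is the point where the geometric interpolant reduces to $p_T$ and no normalizer appears. This observation is not needed for the proposition itself.

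The only step needing care is making sure the manipulations are legitimate. The rearrangement requires the relevant expectations to be finite, which holds if $p_T$ and $\pi_{\mathrm{ref}}$ are positive wherever $\pi_\theta$ puts mass, as with softmax policies over a finite vocabulary and bounded length. The reward in Eq.~\ref{eq:teacher_log_ratio_reward} is treated as a fixed function of $y$, so it carries no dependence on $\theta$ beyond the sampling distribution, and this matches how the KL's own expectation is taken. With these conditions checked, the argument is a one-line cancellation.
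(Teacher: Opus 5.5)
Your proposal is correct and takes essentially the same route as the paper: you expand the KL penalty as an expectation under $\pi_\theta$, combine the integrands, and observe that the $\log\pi_{\mathrm{ref}}$ terms cancel at $\beta=1$, leaving $-D_{\mathrm{KL}}(\pi_\theta\,\|\,p_T)$. The only difference is cosmetic: the paper first records the general identity $\mathcal{J}_\beta=-D_{\mathrm{KL}}(\pi_\theta\,\|\,p_T)-(\beta-1)D_{\mathrm{KL}}(\pi_\theta\,\|\,\pi_{\mathrm{ref}})$ and then specializes to $\beta=1$, whereas you set $\beta=1$ directly; your support and finiteness remark adds rigor that the paper leaves implicit.
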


At $\beta=1$, the reference-policy terms in the log-ratio reward and KL penalty cancel,
leaving exactly the negative reverse KL from the student to the privileged teacher. Thus,
standard OPSD is not separate from policy optimization; it is one member of this
KL-regularized family. For $\beta>1$, the remaining reference penalty makes the update
more conservative, while decreasing $\beta$ toward $1$ moves the objective more
aggressively toward the teacher. The proof is given in Appendix~\ref{app:regularized_rl}.

\subsection{Variable \texorpdfstring{$\beta$}{beta} Defines a Path of Optimal Targets}
\label{sec:optimal_policy_interpolant}

Once vanilla OPSD is recognized as the $\beta=1$ case, the next question is what target
each value of $\beta$ defines. For any fixed regularization coefficient, the optimal policy of
$\beta$-OPSD can be written in closed form.

\begin{proposition}[Optimal policy of $\beta$-OPSD]
\label{prop:geometric_optimal_policy}
Assuming compatible support, the maximizer of Eq.~\ref{eq:beta_opsd_objective} is
\begin{equation}
    \pi_{\beta}^{\star}(y\mid x,c)
    =
    \frac{
    \pi_{\mathrm{ref}}(y\mid x)^{1-\frac{1}{\beta}}
    p_T(y\mid x,c)^{\frac{1}{\beta}}
    }{
    Z_{\beta}(x,c)
    },
    \label{eq:sequence_geometric_mixture}
\end{equation}
where $Z_{\beta}(x,c)$ normalizes over full trajectories.
\end{proposition}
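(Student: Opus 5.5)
We treat the objective in Eq.~\ref{eq:beta_opsd_objective} as a functional of the whole sequence distribution $\pi=\pi_\theta(\cdot\mid x)$. We maximize it over all distributions supported where $\pi_{\mathrm{ref}}$ and $p_T$ are positive, which is the ``compatible support'' assumption. The parameterization is assumed expressive enough that this optimum is attainable. The plan is to rewrite $\mathcal{J}_\beta$ as a negative KL divergence to the candidate $\pi_\beta^\star$, up to a constant, and then invoke Gibbs' inequality. This is the same completing-the-square device that underlies Proposition~\ref{prop:opsd_beta_one}, which is the special case $\beta=1$.

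\textbf{Step 1: merge the two terms.} Write the KL penalty as $\beta\,\mathbb{E}_{y\sim\pi}[\log \pi(y) - \log\pi_{\mathrm{ref}}(y)]$ and combine it with the reward expectation:
\begin{equation*}
\mathcal{J}_\beta(\pi) = \mathbb{E}_{y\sim\pi}\Bigl[\log p_T(y\mid x,c) + (\beta-1)\log\pi_{\mathrm{ref}}(y\mid x) - \beta\log\pi(y)\Bigr].
\end{equation*}
Dividing by $\beta>0$ does not change the maximizer. After dividing, the expression becomes
\begin{equation*}
\mathbb{E}_{y\sim\pi}\Bigl[\log\bigl(\pi_{\mathrm{ref}}(y\mid x)^{1-\frac{1}{\beta}}\,p_T(y\mid x,c)^{\frac{1}{\beta}}\bigr) - \log\pi(y)\Bigr].
\end{equation*}

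\textbf{Step 2: normalize.} Define $Z_\beta(x,c)=\sum_y \pi_{\mathrm{ref}}(y\mid x)^{1-1/\beta}p_T(y\mid x,c)^{1/\beta}$, and let $\pi^\star_\beta$ be as in Eq.~\ref{eq:sequence_geometric_mixture}. Substituting the unnormalized geometric mixture as $Z_\beta\,\pi^\star_\beta$ gives
\begin{equation*}
\tfrac{1}{\beta}\mathcal{J}_\beta(\pi) = \log Z_\beta(x,c) - D_{\mathrm{KL}}\bigl(\pi\,\|\,\pi^\star_\beta\bigr).
\end{equation*}
Since $\log Z_\beta$ does not depend on $\pi$, maximizing $\mathcal{J}_\beta$ is equivalent to minimizing this KL. The KL is nonnegative and vanishes if and only if $\pi=\pi^\star_\beta$. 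The maximizer is therefore unique and equals Eq.~\ref{eq:sequence_geometric_mixture}.

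\textbf{Main obstacle: well-definedness.} The computation itself is routine; the care is needed in showing that $Z_\beta$ is finite and positive. Positivity follows from support overlap. For finiteness, Hölder's inequality with exponents $\beta/(\beta-1)$ and $\beta$ gives $Z_\beta\le 1$ whenever $\beta\ge1$. For $0<\beta<1$ the exponent $1-1/\beta$ is negative, so $\pi_{\mathrm{ref}}$ appears in a denominator. Finiteness then requires $p_T\ll\pi_{\mathrm{ref}}$ together with finiteness of the resulting sum. Over a finite vocabulary with bounded length the sum is automatically finite, so in that setting the compatible-support assumption is exactly what is needed. Terms with $y$ outside the common support are handled by the usual conventions $0\log 0=0$ and infinite KL.
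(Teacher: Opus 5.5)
Your proposal is correct and follows essentially the same route as the paper. Both rewrite the objective as $\mathcal{J}_\beta(\pi)=\beta\log Z_\beta(x,c)-\beta D_{\mathrm{KL}}(\pi\,\|\,\pi^\star_\beta)$ and conclude from the nonnegativity of KL, with equality if and only if $\pi=\pi^\star_\beta$; you divide by $\beta$ where the paper multiplies $\log\pi^\star_\beta$ by $\beta$, which is only a cosmetic difference, and your extra well-definedness remarks (the H\"older bound $Z_\beta\le 1$ for $\beta\ge 1$ and the absolute-continuity requirement for $0<\beta<1$) are correct and make explicit what the paper leaves inside ``compatible support.''
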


The proof is in Appendix~\ref{app:geometric-optimal-policy}. The solution is a geometric
interpolation between the reference policy and the privileged teacher. For $\beta\ge 1$, the
teacher weight $\frac{1}{\beta}$ lies in $[0,1]$: $\beta=1$ recovers the teacher endpoint,
while $\beta\to\infty$ approaches the reference endpoint. Thus, the teacher is no longer
the unique target. Each value of $\beta$ selects a different optimal policy along a principled
reference-to-teacher path.

\paragraph{Scheduling the interpolation.}
The theory above defines a family of targets for fixed $\beta$. In practice, we exploit this
new degree of freedom by varying the coefficient over training. We write the coefficient at
step $k$ as $\beta_k$; its inverse directly determines the teacher weight. We require
\begin{equation}
    \frac{1}{\beta_k}\in[0,1],
    \qquad
    \text{equivalently } \beta_k\ge 1,
\end{equation}
so that the target remains on the path between the reference and the teacher. A larger
$\frac{1}{\beta_k}$ gives a more teacher-like target, while a smaller
$\frac{1}{\beta_k}$ keeps the target closer to the reference policy.

In practice, we use a bounded linear schedule for the teacher weight:
\begin{equation}
    \frac{1}{\beta_k}
    =
    w_{\mathrm{start}}
    +
    \left(
    w_{\mathrm{end}}-w_{\mathrm{start}}
    \right)
    \frac{k}{K-1},
    \qquad
    k=0,\ldots,K-1.
    \label{eq:bounded_beta_schedule}
\end{equation}
Equivalently,
\begin{equation}
    \beta_k
    =
    \left[
    w_{\mathrm{start}}
    +
    \left(
    w_{\mathrm{end}}-w_{\mathrm{start}}
    \right)
    \frac{k}{K-1}
    \right]^{-1}.
\end{equation}
When $w_{\mathrm{start}}<w_{\mathrm{end}}$, early targets remain closer to the
reference policy, while later targets incorporate stronger teacher guidance. This smooth
curriculum avoids an abrupt projection onto the privileged teacher and provides a direct
mechanism for reducing the brittleness of vanilla OPSD.

\paragraph{Local logit realization.}
Although Eq.~\ref{eq:sequence_geometric_mixture} gives the optimal trajectory-level
distribution, it is not directly usable for language models because the normalizer
$Z_{\beta}(x,c)$ ranges over all possible completions. We therefore approximate the
optimal policy locally at each decoding prefix.

Let
\begin{equation}
    h_t=(x,y_{<t})
\end{equation}
be the prefix at token $t$. Let $z_{\mathrm{ref}}(\cdot\mid h_t)$ denote the reference
or student-side logits, and let $z_T(\cdot\mid h_t,c)$ denote the privileged teacher logits.
Given the scheduled coefficient $\beta_k$, we define the local logit-interpolant target as
\begin{equation}
    \tilde p_{\beta_k}(\cdot\mid h_t,c)
    =
    \operatorname{softmax}
    \left(
    \left(1-\frac{1}{\beta_k}\right)
    z_{\mathrm{ref}}(\cdot\mid h_t)
    +
    \frac{1}{\beta_k}
    z_T(\cdot\mid h_t,c)
    \right).
    \label{eq:local_logit_interpolant}
\end{equation}
At each prefix, this distribution is exactly the normalized geometric interpolation of the
local reference and teacher distributions: multiplying probabilities geometrically is
equivalent, up to normalization, to linearly interpolating their logits.

The resulting autoregressive interpolant target is
\begin{equation}
    \tilde p_{\beta_k}(y\mid x,c)
    =
    \prod_{t=1}^{T}
    \tilde p_{\beta_k}(y_t\mid x,y_{<t},c).
    \label{eq:sequence_logit_interpolant}
\end{equation}
Because normalization is applied separately at each prefix, the resulting autoregressive
distribution is a tractable local approximation to the sequence-level optimum in
Eq.~\ref{eq:sequence_geometric_mixture}. This is the target used by the practical
distillation objective below.

\subsection{From Policy Optimization Back to Distillation}
%\subsection{Policy-Optimal Targets via Efficient Self-Distillation}
\label{sec:logit_interpolant_distillation}

The $\beta$-OPSD objective in Eq.~\ref{eq:beta_opsd_objective} can in principle be
optimized with KL-regularized RL algorithms. Doing so, however, would introduce additional
RL machinery, such as reward-weighted policy updates and advantage estimation, together
with higher-variance gradients. The analysis above provides a cheaper route: the policy
objective specifies an optimal sequence-level target, and
Eq.~\ref{eq:sequence_logit_interpolant} gives its tractable local approximation.

This is the main ``back to distillation'' step. Rather than directly maximizing the RL
objective, at training step $k$ we train the student to match the RL-derived interpolant
target $\tilde p_{\beta_k}$ by minimizing
\begin{equation}
    \mathcal{L}_{\beta_k}(\theta)
    =
    D_{\mathrm{KL}}
    \left(
    \pi_\theta(\cdot\mid x)
    \,\middle\|\,
    \tilde p_{\beta_k}(\cdot\mid x,c)
    \right).
    \label{eq:interpolant_distillation_objective}
\end{equation}
Equivalently,
\begin{equation}
    \mathcal{L}_{\beta_k}(\theta)
    =
    \mathbb{E}_{y\sim\pi_\theta(\cdot\mid x)}
    \left[
    \log
    \frac{
    \pi_\theta(y\mid x)
    }{
    \tilde p_{\beta_k}(y\mid x,c)
    }
    \right].
\end{equation}
Using the autoregressive factorization in Eq.~\ref{eq:sequence_logit_interpolant}, this
becomes
\begin{equation}
    \mathcal{L}_{\beta_k}(\theta)
    =
    \mathbb{E}_{y\sim\pi_\theta(\cdot\mid x)}
    \left[
    \sum_{t=1}^{T}
    \rho_t^{\beta_k}(y)
    \right],
    \label{eq:interpolant_kl_decomposition}
\end{equation}
where
\begin{equation}
    \rho_t^{\beta_k}(y)
    =
    \log \pi_\theta(y_t\mid x,y_{<t})
    -
    \log \tilde p_{\beta_k}(y_t\mid x,y_{<t},c).
    \label{eq:interpolant_log_ratio}
\end{equation}

This objective has the same form as vanilla OPSD, except that the fixed teacher
$p_T$ is replaced by the scheduled interpolant target $\tilde p_{\beta_k}$. When
$\beta_k=1$, the interpolant target reduces to the privileged teacher, and
Eq.~\ref{eq:interpolant_distillation_objective} recovers standard OPSD. For larger
$\beta_k$, the target remains closer to the reference policy, yielding a more conservative
distillation objective.

The interpolant target is efficient to compute. Once the reference/student logits and
teacher logits are available, constructing $\tilde p_{\beta_k}$ only requires an elementwise
weighted sum of the two logit vectors followed by a softmax, as in
Eq.~\ref{eq:local_logit_interpolant}. Importantly, this avoids the intractable
sequence-level normalizer $Z_\beta(x,c)$ in Eq.~\ref{eq:sequence_geometric_mixture}.
Thus, policy optimization specifies where the student should move, while token-level
distillation provides an inexpensive way to get there.

\subsection{Practical \texorpdfstring{$\beta$}{beta}-OPSD with Return-to-Go Credit}
\label{sec:practical_slide_objective}
We now describe the practical training loss. At each update, we sample trajectories
on-policy from the current student:
\begin{equation}
    y\sim\pi_\theta(\cdot\mid x).
\end{equation}
Along the sampled trajectory, we evaluate the token-level mismatch between the student and
the interpolant target:
\begin{equation}
    \rho_t^{\beta_k}(y)
    =
    \log \pi_\theta(y_t\mid x,y_{<t})
    -
    \log \tilde p_{\beta_k}(y_t\mid x,y_{<t},c).
    \label{eq:practical_interpolant_log_ratio}
\end{equation}
The scalar estimator $\sum_t \rho_t^{\beta_k}(y)$ is an unbiased Monte Carlo estimate of
the sequence-level reverse KL in Eq.~\ref{eq:interpolant_distillation_objective}. However,
using each token's instantaneous log-ratio as its training weight gives a myopic gradient:
it ignores how an earlier token affects future prefixes and future target mismatch.

Therefore, $\beta$-OPSD uses a return-to-go weight:
\begin{equation}
    G_{t,\gamma}^{\beta_k}(y)
    =
    \sum_{s=t}^{T}
    \gamma^{s-t}
    \rho_s^{\beta_k}(y),
    \qquad
    \gamma\in[0,1].
    \label{eq:slide_return_to_go}
\end{equation}
When $\gamma=1$, this recovers the exact sequence-level gradient estimator; in practice,
we use $\gamma<1$ to control the magnitude of the return on long generations. The proof is in Appendix~\ref{app:onpolicy-lookahead-gradient}.

The practical $\beta$-OPSD loss is
\begin{equation}
    \mathcal{L}_{\beta}(\theta;y)
    =
    \frac{1}{T}
    \sum_{t=1}^{T}
    \operatorname{sg}
    \left(
    G_{t,\gamma}^{\beta_k}(y)
    \right)
    \log \pi_\theta(y_t\mid x,y_{<t}),
    \label{eq:slide_loss}
\end{equation}
where $\operatorname{sg}(\cdot)$ denotes stop-gradient. 
The interpolant logits and
return-to-go weights are detached; the only gradient path is through the student
log-probability. Thus, $\beta$-OPSD differs from vanilla OPSD in two simple ways:
\begin{enumerate}
    \item it replaces the fixed teacher target with a scheduled logit-interpolant target, and
    \item it replaces local token weights with return-to-go weights for sequence-level credit
    assignment.
\end{enumerate}

The first modification is the main conceptual contribution: it follows from the
$\beta$-OPSD optimal policy. The second is a practical sequence-level estimator that makes
the distillation loss faithful to the trajectory objective.

%% file: sections/relatedwork.tex
\section{Related Work}
\label{sec:related_work}

The method above connects two themes that are often treated separately: how to choose a stable teacher target, and how to assign credit along an on-policy trajectory. We organize related work around this split. Prior work provides ingredients for smoother supervision, reference-regularized policy optimization, and return-based credit assignment; \ours{} combines these ingredients into a single distillation procedure.

\paragraph{On-policy distillation and self-distillation.} On-policy distillation trains the student on its own generated trajectories, using dense
teacher feedback at each token. This avoids the exposure-bias mismatch of purely offline
distillation, but introduces a new challenge: the teacher must provide reliable supervision
on student-generated prefixes, which may be outside the teacher's typical distribution \citep{li2026rethinking, jang2026stable}.
Recent self-distillation methods further instantiate the teacher as the same model with
access to privileged information, such as ground-truth solutions or feedback \citep{zhao2026self, liu2026self, yang2026self, shen2026anti, zhang2026opsdl, hubotter2026reinforcement, he2026self}. \ours{} belongs to this line of work, but differs by replacing the fixed teacher target with a scheduled
student--teacher interpolant and by correcting the myopic local token-KL gradient with a
Look-Ahead return.

\paragraph{RLHF and reference-regularized policy optimization.}
Our theoretical formulation is closely related to KL-regularized reinforcement learning and
RLHF, where a policy is optimized to maximize reward while staying close to a reference
model \citep{ziegler2019fine, ouyang2022training, schulman2017proximal}. DPO shows that the optimal policy of
a KL-regularized preference objective can be written in closed form, revealing a direct
connection between rewards, reference policies, and optimal distributions. We use a similar
perspective to reinterpret on-policy distillation: the teacher-to-reference log-probability
ratio acts as a reward, and standard OPD is recovered as the $\beta=1$ special case. This
view leads directly to the geometric interpolant between the reference student and the
privileged teacher.

\paragraph{Interpolation, curricula, and teacher-guided learning.}
Curriculum learning aims to make optimization smoother by gradually changing the
difficulty or source of supervision \citep{bengio2009curriculum}. Related ideas also
appear in sequence prediction and imitation learning: scheduled sampling gradually moves
training from teacher-forced inputs toward model-generated inputs
\citep{bengio2015scheduled}, while DAgger addresses distribution shift by querying expert
supervision on learner-induced states \citep{ross2011reduction}. More recent work has
explored student--teacher mixture reformulations to bridge the distributional gap in
distillation \citep{jang2026stable}, as well as teacher-guided or
student--teacher cooperative sampling for improving training rollouts
\citep{xu2025speculative,shenfeld2023tgrl}. Our logit-interpolant target can be viewed as
a distributional curriculum: early targets remain close to the student, while later targets
move toward the privileged teacher. Unlike heuristic interpolation or guidance used only
for sampling, our interpolant is derived as the optimal policy of a reference-regularized
distillation objective. This provides a principled target path from student to teacher and,
when combined with return-to-go credit assignment, yields a stable and future-aware
on-policy distillation algorithm.

\paragraph{Policy gradients and credit assignment.}
The return-to-go estimator used in $\beta$-OPSD is closely related to classical policy-gradient methods such as
REINFORCE \citep{williams1992simple}, where unbiased gradient estimation assigns each action its
return-to-go rather than only its instantaneous reward. In OPD, the analogous per-step
signal is the teacher--student log-ratio along a generated trajectory. Prior work has explored
using local token-level signals for stable distillation \citep{li2026rethinking, armandpour2026unmasking, shen2026anti, jang2026stable, kim2026does}, and related derivations have
also identified unbiased sequence-level gradient estimators for reverse-KL objectives
\citep{agrawal2026reinforcement, liu2026self}. Future-aware credit assignment has also been studied in GRPO-style optimization
\citep{schulman2015high, sutton1998reinforcement}. Our work builds on this perspective and shows that, for on-policy distillation, the
standard local token-KL update is myopic because it ignores the future distributional
mismatch induced by earlier tokens. \ours{} applies return-to-go credit assignment directly
to teacher-guided distillation: the exact undiscounted Look-Ahead estimator recovers an
unbiased sequence-level KL gradient under student on-policy sampling, while discounted
implementations provide a practical stability--variance trade-off for long generations.

%% file: sections/experiments.tex
\vspace{-2mm}
\section{Experiments}
\label{sec:experiments}

\subsection{Experimental Setup}
\label{sec:experimental_setup}

\paragraph{Models and Datasets.}
We evaluate $\beta$-OPSD on the instruct-tuned Qwen3 model family, including
Qwen3-1.7B, Qwen3-4B, and Qwen3-8B. For training, we use the mathematical reasoning
subset of OpenThoughts. Each training example contains a problem $x$ and a ground-truth
solution, which we use as privileged information $c$ for the teacher branch. The student
generates solutions conditioned only on the problem $x$, while the privileged teacher can
condition on both $x$ and $c$ during training.

We evaluate on competition-level mathematical reasoning benchmarks, including
AIME 2024, AIME 2025, and HMMT 2025. These benchmarks require multi-step reasoning
and therefore test whether the method improves the quality of generated reasoning
trajectories rather than only local token imitation.

\paragraph{Baselines.}
We compare against three classes of baselines. First, we include the base model without
post-training. Second, we compare with supervised fine-tuning (SFT), which trains directly
on ground-truth solutions with the standard next-token prediction objective. Third, we
compare with vanilla on-policy self-distillation (OPSD), which distills directly toward the
privileged teacher using a local token-KL objective. In our notation, vanilla OPSD uses the
teacher endpoint as the target but does not use the scheduled interpolant curriculum or the
return-to-go credit-assignment estimator. We also compare with GRPO-style reinforcement
learning baselines, which optimize the student from sampled generations using
outcome-level rewards.

\paragraph{Implementation Details.}
For the main $\beta$-OPSD experiments, we use a bounded linear path for the teacher
weight rather than starting exactly at the student endpoint or ending exactly at the teacher
endpoint. Let
\begin{equation}
    s_k = \frac{k}{K-1},
    \qquad
    k=0,\ldots,K-1,
\end{equation}
be the normalized training progress. We define the practical teacher weight
\begin{equation}
    w_k
    =
    w_{\mathrm{start}}
    +
    \left(
    w_{\mathrm{end}}-w_{\mathrm{start}}
    \right)s_k,
    \label{eq:exp_weight_schedule}
\end{equation}
where $w_k=1/\beta_k$. Thus, $w_k$ is the teacher-side weight in the logit interpolant,
while $\beta_k$ is the corresponding regularization coefficient in the $\beta$-OPSD view.

At each update, we construct the interpolant target from a stop-gradient copy of the
current student and the privileged teacher. For a prefix $h_t=(x,y_{<t})$, we use
\begin{equation}
    \tilde p_{\beta_k}(\cdot\mid h_t,c)
    =
    \operatorname{softmax}
    \left(
    (1-w_k)z_{\bar{\theta}}(\cdot\mid h_t)
    +
    w_k z_T(\cdot\mid h_t,c)
    \right),
    \label{eq:exp_interpolant_target}
\end{equation}
where $\bar{\theta}=\operatorname{sg}(\theta)$ denotes a stop-gradient copy of the
current student. The token-level mismatch is
\begin{equation}
    \rho_t^{\beta_k}(y)
    =
    \log \pi_\theta(y_t\mid x,y_{<t})
    -
    \log \tilde p_{\beta_k}(y_t\mid x,y_{<t},c).
    \label{eq:exp_token_mismatch}
\end{equation}
We then compute the discounted return-to-go in Eq.~\ref{eq:slide_return_to_go} and the \ours{} loss in Eq.~\ref{eq:slide_loss}
\begin{equation}
    G_{t,\gamma}^{\beta_k}(y)
    =
    \sum_{s=t}^{T}
    \gamma^{s-t}
    \rho_s^{\beta_k}(y),
    \label{eq:exp_return_to_go}
\end{equation}
and optimize the practical $\beta$-OPSD loss
\begin{equation}
    \mathcal{L}_{\beta\text{-OPSD}}(\theta;y)
    =
    \frac{1}{T}
    \sum_{t=1}^{T}
    \operatorname{sg}
    \left(
    G_{t,\gamma}^{\beta_k}(y)
    \right)
    \log \pi_\theta(y_t\mid x,y_{<t}).
    \label{eq:exp_beta_opsd_loss}
\end{equation}

Unless otherwise specified, we use $w_{\mathrm{start}}=0.5$,
$w_{\mathrm{end}}=0.8$, $K=200$, and $\gamma=0.99$. These choices keep early targets
relatively close to the student while still exposing the model to stronger teacher guidance
later in training. All experiments are conducted on NVIDIA RTX A6000 or NVIDIA H200
GPUs with LoRA \citep{hu2022lora}. Additional training, generation, and evaluation details
are provided in Appendix~\ref{sec:exp_details}.

\input{tables/qwen1.7b_eval}
\input{sections/algorithm}

\subsection{Results on Math Reasoning}
\label{sec:math_results}

Table~\ref{tab:qwen3_results} reports avg@12 performance for Qwen3 models evaluated
at the 100-step checkpoint from the default 200-step training schedule. Across all three
model scales, $\beta$-OPSD improves the overall average over Vanilla OPSD and achieves
the best average performance among all evaluated methods. The gains are most pronounced
for Qwen3-1.7B, where $\beta$-OPSD improves over Vanilla OPSD by $9.16$, $5.27$,
and $2.78$ points on AIME 2024, AIME 2025, and HMMT 2025, respectively, yielding a
$5.74$-point improvement in average performance.

The larger-scale results show that the method remains competitive as model size increases.
On Qwen3-4B and Qwen3-8B,
$\beta$-OPSD improves the overall average over Vanilla OPSD by $1.76$ and $1.66$
points, respectively. While it slightly underperforms Vanilla OPSD on some individual
benchmarks, such as AIME 2024 for Qwen3-4B and HMMT 2025 for Qwen3-8B, it gives
the strongest overall performance across the benchmark suite.

Compared with other post-training baselines, $\beta$-OPSD consistently outperforms SFT
across all model scales and achieves stronger performance than GRPO. Overall, these results support our central claim: replacing direct teacher imitation with
scheduled logit-interpolant targets provides a smoother and more effective distillation
objective. The advantage is especially clear for smaller models, where the student--teacher
distributional gap is larger, and remains competitive or stronger at larger
scales.

\subsection{Ablation Studies}
\label{sec:ablation_studies}

We conduct controlled ablations to isolate the contribution of each component of
$\beta$-OPSD. Unless otherwise specified, all variants use the same student, privileged
teacher, training data, optimization hyperparameters, training budget, and evaluation
protocol as the main experiments. We vary only the target distribution, credit-assignment
estimator, interpolant reference, or interpolation schedule under study. All ablations are
conducted with Qwen3-1.7B and report avg@12 on AIME 2024, AIME 2025, and HMMT 2025.

\subsubsection{Effect of Logit Interpolants}
\label{sec:ablation_logit_interpolants}

We first study whether the improvement of $\beta$-OPSD comes from the logit-interpolant
target or solely from the return-to-go credit-assignment estimator. To isolate the effect of
the target distribution, we apply the same return-to-go loss to two different distillation
targets.

The first variant, \emph{Teacher + Return-to-go}, distills directly toward the privileged
teacher. Its token-level log-ratio is
\begin{equation}
    \rho_t^{\mathrm{teach}}(y)
    =
    \log \pi_\theta(y_t\mid x,y_{<t})
    -
    \log p_T(y_t\mid x,y_{<t},c),
\end{equation}
with return-to-go $G_{t,\gamma}^{\mathrm{teach}}(y)=\sum_{s=t}^{T}\gamma^{s-t}\rho_s^{\mathrm{teach}}(y).$

The second variant uses the scheduled $\beta$-OPSD target. Its token-level log-ratio is 
\begin{equation}
    \rho_t^{\beta_k}(y)
    =
    \log \pi_\theta(y_t\mid x,y_{<t})
    -
    \log \tilde p_{\beta_k}(y_t\mid x,y_{<t},c),
\end{equation}
with return-to-go $G_{t,\gamma}^{\beta_k}(y)=\sum_{s=t}^{T}\gamma^{s-t}\rho_s^{\beta_k}(y).$

Both variants use student on-policy trajectories and identical return-to-go credit
assignment. Therefore, this comparison isolates the benefit of replacing the fixed teacher
target with the scheduled logit-interpolant target.

As shown in Table~\ref{tab:logit_interpolant_ablation}, the $\beta$-OPSD target
consistently outperforms direct teacher distillation, improving avg@12 by $6.03$ points on
AIME 2024, $5.30$ points on AIME 2025, and $1.67$ points on HMMT 2025. These gains
highlight the importance of a smooth transition in the target objective: rather than abruptly
projecting the student onto the teacher, the interpolant target reduces the student--target
gap and provides a more stable learning signal.

\input{tables/logit_interpolant_ablation}

\subsubsection{Effect of Return-to-go Credit Assignment}
\label{sec:ablation_lookahead_loss}

We next isolate the contribution of return-to-go credit assignment. Both variants use the
same logit-interpolant target $\tilde p_{\beta_k}$ and the same student-generated
trajectories, but differ in how the token-level signal is assigned.

The \emph{Local Token Gradient} variant uses only the instantaneous log-ratio:
\begin{equation}
    \widehat{g}_{\mathrm{local}}(y)
    =
    \sum_{t=1}^{T}
    \rho_t^{\beta_k}(y)
    \nabla_\theta
    \log \pi_\theta(y_t\mid x,y_{<t}).
    \label{eq:ablation_local_gradient}
\end{equation}
In contrast, the \emph{Return-to-go} variant uses the cumulative future mismatch:
\begin{equation}
    \widehat{g}_{\mathrm{RTG}}(y)
    =
    \sum_{t=1}^{T}
    G_{t,\gamma}^{\beta_k}(y)
    \nabla_\theta
    \log \pi_\theta(y_t\mid x,y_{<t}).
    \label{eq:ablation_rtg_gradient}
\end{equation}

As shown in Table~\ref{tab:lookahead_credit_ablation}, return-to-go credit assignment
consistently outperforms the local token-gradient objective. It improves avg@12 by
$1.12$ points on AIME 2024, $5.55$ points on AIME 2025, and $3.61$ points on
HMMT 2025. Since the two variants share the same target and sampling distribution, these
gains isolate the benefit of assigning earlier tokens credit for the future KL mismatch they
induce.

\input{tables/lad_ablation}

\subsubsection{Effect of Interpolant References}
\label{sec:ablation_interpolant_reference}

We ablate how the two endpoints of the logit interpolant are constructed. Let
$\theta_0$ denote the initial student parameters, let $\theta_k$ denote the student
parameters at training step $k$, and let
$\bar{\theta}_k=\operatorname{sg}(\theta_k)$ denote a stop-gradient copy of the current
student. In our self-distillation setting, the teacher branch uses the same model
architecture but additionally conditions on privileged information $c$.

We compare three reference choices. First, \emph{fixed student + fixed teacher} constructs
both endpoints from the initial model:
\begin{equation}
    \tilde p_{w}^{\mathrm{FF}}
    (\cdot\mid h_t,c)
    =
    \operatorname{softmax}
    \left(
    (1-w)z_{\theta_0}(\cdot\mid h_t)
    +
    w z_{\theta_0}(\cdot\mid h_t,c)
    \right).
    \label{eq:fixed_fixed_interpolant}
\end{equation}
Second, \emph{dynamic student + dynamic teacher} refreshes both endpoints from the
current model:
\begin{equation}
    \tilde p_{w}^{\mathrm{DD}}
    (\cdot\mid h_t,c)
    =
    \operatorname{softmax}
    \left(
    (1-w)z_{\bar{\theta}_k}(\cdot\mid h_t)
    +
    w z_{\bar{\theta}_k}(\cdot\mid h_t,c)
    \right).
    \label{eq:dynamic_dynamic_interpolant}
\end{equation}
Third, \emph{dynamic student + fixed teacher}, our default construction, uses the current
student as the student-side endpoint and a fixed privileged teacher as the teacher-side
endpoint:
\begin{equation}
    \tilde p_{w}^{\mathrm{DF}}
    (\cdot\mid h_t,c)
    =
    \operatorname{softmax}
    \left(
    (1-w)z_{\bar{\theta}_k}(\cdot\mid h_t)
    +
    w z_{\theta_0}(\cdot\mid h_t,c)
    \right).
    \label{eq:dynamic_fixed_interpolant}
\end{equation}

To isolate the effect of the interpolant references, we evaluate all three constructions using
the same return-to-go estimator and identical training settings. We keep the teacher weight
fixed at $w=0.5$, so that the two endpoints contribute equally to the target distribution.
As shown in Figure~\ref{fig:interpolant_reference_ablation}, the dynamic-student and
fixed-teacher construction performs best across all three benchmarks. These results suggest
that the interpolant benefits from tracking the evolving student distribution while retaining
a stable privileged teacher as its supervisory anchor.

\input{figures/interpolation_reference}

\subsubsection{Effect of the Interpolation Schedule}
\label{sec:ablation_interpolation_schedule}

Finally, we study how the interpolation schedule affects performance. We focus on linear
schedules for the teacher weight $w_k=1/\beta_k$, varying the start and end values in
Eq.~\ref{eq:exp_weight_schedule}. We also include a fixed $w_k=0.5$ baseline.

As shown in Table~\ref{tab:interpolation_schedule_ablation}, the schedule influences
performance across benchmarks. The $0.5\rightarrow0.8$ schedule gives the strongest
overall result among the schedules considered here, while other schedules can be better on
individual benchmarks. This suggests that the strength and timing of teacher guidance are
important practical choices. Beyond the linear schedules studied here, nonlinear,
piecewise, or performance-adaptive schedules may further improve performance. Determining
whether an optimal schedule exists, and how it depends on model scale, training budget,
and task distribution, is an interesting direction for future work.

\input{tables/schedule_ablation}

\begin{remark}
In addition to $\beta$-OPSD, which changes the distillation target through scheduled
logit interpolants, one can also construct guided rollouts by sampling from a mixture of the
student and the privileged teacher. This variant changes the trajectory distribution rather
than only the target distribution, and therefore requires importance correction. We provide
details and results in Appendix~\ref{app:proposal_mixture}.
\end{remark}

%% file: tables/qwen1.7b_eval.tex
\begin{table*}[t]
\centering
\small
\caption{
Avg@12 results for Qwen3 models evaluated at the 100-step checkpoint from the default
200-step training schedule. Gains denote absolute differences of \ours{} relative to
Vanilla OPSD within the same model scale. For \ours{}, we report results under linear
schedules along the target path.
}
\label{tab:qwen3_results}
\setlength{\tabcolsep}{9pt}
\renewcommand{\arraystretch}{1.08}
\begin{tabular}{@{}lcccc@{}}
\toprule
Method
& AIME 2024
& AIME 2025
& HMMT 2025
& Average \\
\midrule

\multicolumn{5}{@{}l}{\textit{Qwen3-8B}} \\
\quad Base model
& 73.06
& 67.78
& 29.72
& 56.85 \\
\quad Vanilla OPSD
& 75.83
& 66.67
& \textbf{33.06}
& 58.52 \\
\quad SFT
& 72.78
& 64.72
& 30.83
& 56.11 \\
\quad GRPO
& 75.83
& 68.61
& 31.11
& 58.52 \\
\rowcolor{gray!12}
\quad \textbf{\oursbold{} (ours)}
& \textbf{78.33} \posgain{+2.50}
& \textbf{70.28} \posgain{+3.61}
& 31.94 \neggain{-1.12}
& \textbf{60.18} \posgain{+1.66} \\

\midrule

\multicolumn{5}{@{}l}{\textit{Qwen3-4B}} \\
\quad Base model
& \textbf{77.78}
& 64.17
& 29.44
& 57.13 \\
\quad Vanilla OPSD
& 75.00
& 65.00
& 28.33
& 56.11 \\
\quad SFT
& 23.61
& 19.72
& 10.56
& 17.96 \\
\quad GRPO
& 76.39
& 64.17
& 30.28
& 56.95 \\
\rowcolor{gray!12}
\quad \textbf{\oursbold{} (ours)}
& 73.89 \neggain{-1.11}
& \textbf{69.17} \posgain{+4.17}
& \textbf{30.56} \posgain{+2.23}
& \textbf{57.87} \posgain{+1.76} \\

\midrule

\multicolumn{5}{@{}l}{\textit{Qwen3-1.7B}} \\
\quad Base model
& 50.00
& 37.22
& 14.44
& 33.89 \\
\quad Vanilla OPSD
& 44.17
& 35.56
& 13.33
& 31.02 \\
\quad SFT
& 37.50
& 29.17
& 12.22
& 26.30 \\
\quad GRPO
& 48.33
& 38.61
& 15.00
& 33.98 \\
\rowcolor{gray!12}
\quad \textbf{\oursbold{} (ours)}
& \textbf{53.33} \posgain{+9.16}
& \textbf{40.83} \posgain{+5.27}
& \textbf{16.11} \posgain{+2.78}
& \textbf{36.76} \posgain{+5.74} \\

\bottomrule
\end{tabular}
\end{table*}

%% file: sections/algorithm.tex
\begin{algorithm}[t]
\caption{\ours: Smooth Target Distillation Toward Logit Interpolants}
\label{alg:slide}
\begin{algorithmic}[1]
\Require Student $\pi_\theta$, privileged teacher $p_T$, dataset $\mathcal{D}$,
training budget $K$, schedule endpoints $w_{\mathrm{start}},w_{\mathrm{end}}$,
discount factor $\gamma$
\For{$k=0,\ldots,K-1$}
    \State Sample a minibatch $(x,c)\sim\mathcal{D}$
    \State Sample on-policy responses $y=(y_1,\ldots,y_T)\sim\pi_\theta(\cdot\mid x)$
    \State Set normalized progress $s_k=k/(K-1)$
    \State Set teacher weight $w_k$ by Eq.~\ref{eq:exp_weight_schedule} and $\beta_k=1/w_k$
    \State Set $\bar{\theta}\leftarrow \operatorname{sg}(\theta)$
    \State Construct interpolant targets $\tilde p_{\beta_k}(\cdot\mid h_t,c)$
    along the sampled prefixes using Eq.~\ref{eq:exp_interpolant_target}
    \State Compute token mismatches $\rho_t^{\beta_k}(y)$ using Eq.~\ref{eq:exp_token_mismatch}
    \State Compute discounted return-to-go weights $G_{t,\gamma}^{\beta_k}(y)$
    using Eq.~\ref{eq:exp_return_to_go}
    \State Minimize $\mathcal{L}_{\ours}(\theta;y)$ in Eq.~\ref{eq:exp_beta_opsd_loss}
    and update $\theta$
\EndFor
\end{algorithmic}
\end{algorithm}

%% file: tables/logit_interpolant_ablation.tex
\begin{table}[t]
\centering
\small
\caption{
Effect of the distillation target. Both variants use the same return-to-go
estimator. We report avg@12 (\%); the best result in each column is bolded. For the \ours{} target, we report the result with $w_{\mathrm{start}}=0.5$ and $w_{\mathrm{end}}=0.8$.
}
\label{tab:logit_interpolant_ablation}
\begin{tabular}{lccc}
\toprule
Method
& AIME 2024
& AIME 2025
& HMMT 2025 \\
\midrule
Teacher + Return-to-go
& 47.30
& 35.53
& 14.44 \\
\textbf{\oursbold{} target}
& \textbf{53.33}
& \textbf{40.83}
& \textbf{16.11} \\
\bottomrule
\end{tabular}
\end{table}

%% file: tables/lad_ablation.tex
\begin{table}[t]
\centering
\small
\caption{
Effect of return-to-go credit assignment. Both variants use the same logit-interpolant
target with fixed schedules $w_{k}=\frac{1}{\beta_k}=0.5$. We report avg@12 (\%).
}
\label{tab:lookahead_credit_ablation}
\begin{tabular}{lccc}
\toprule
Method
& AIME 2024
& AIME 2025
& HMMT 2025 \\
\midrule
Local Token Gradient& 49.44
& 32.78
& 13.06 \\
\textbf{Return-to-go Gradient}
& \textbf{50.56}
& \textbf{38.33}
& \textbf{16.67} \\
\bottomrule
\end{tabular}
\end{table}

%% file: figures/interpolation_reference.tex
\begin{figure}[t]
    \centering
    \includegraphics[width=\columnwidth]{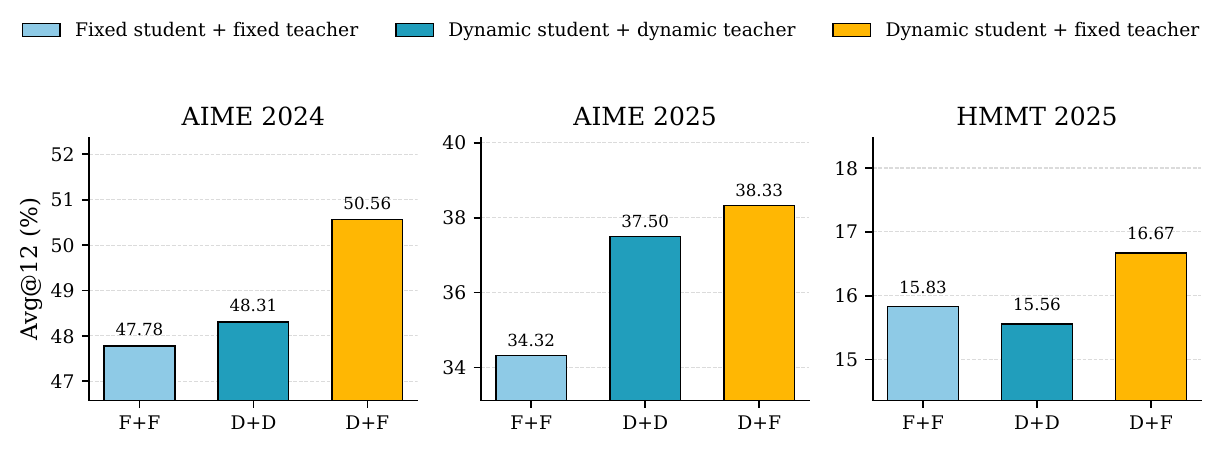}
    \caption{
    Effect of interpolant reference choices. All variants use the same return-to-go
    estimator and fixed teacher weight $w_k=1/\beta_k=0.5$. We report avg@12.
    Each subplot uses a benchmark-specific y-axis range to highlight relative differences.
    The dynamic student + fixed teacher construction performs best across all benchmarks.
    }
    \label{fig:interpolant_reference_ablation}
\end{figure}

%% file: tables/schedule_ablation.tex
\begin{table}[t]
\centering
\small
\caption{
Effect of interpolation schedules. The notation $a\rightarrow b$ indicates that the teacher-weight endpoints $w_{\mathrm{start}}=a$ and $w_{\mathrm{end}}=b$ are using the schedule in Eq~\ref{eq:exp_weight_schedule}. We report avg@12 (\%).
}
\label{tab:interpolation_schedule_ablation}
\begin{tabular}{lccc}
\toprule
Schedule
& AIME 2024
& AIME 2025
& HMMT 2025 \\
\midrule
Fixed $0.5$
& 50.56
& 38.33
& \textbf{16.67} \\

Linear $0.2\rightarrow0.8$
& 51.39
& 37.50
& 16.39 \\

Linear $0.8\rightarrow0.2$
& 51.94
& 38.06
& 15.83 \\

Linear $0.5\rightarrow0.8$
& 53.33
& \textbf{40.83}
& 16.11 \\

Linear $0.8\rightarrow0.5$
& \textbf{54.72}
& 38.61
& 15.83 \\
\bottomrule
\end{tabular}
\end{table}

%% file: sections/conclusion.tex
\section{Conclusion}
\label{sec:conclusion}
We introduced \ours{}, a framework built on the view that on-policy distillation is
trajectory-level policy optimization rather than merely local token imitation. Under this
view, standard sequence-level OPSD is the $\beta=1$ teacher endpoint of a
reference-regularized RL family. In the conservative regime, the optimal policies form a
geometric path from a reference student to the privileged teacher, motivating tractable
token-level logit interpolants instead of direct teacher matching throughout training.

\ours{} further replaces the myopic local token-KL update with a return-to-go credit assignment that
accounts for future distributional mismatch and provides an unbiased gradient estimator for
the sequence-level objective. Experiments on competition-level mathematical reasoning
benchmarks show consistent improvements over vanilla OPSD, SFT, and GRPO methods. Our
ablations confirm the importance of both interpolant targets and future-aware credit
assignment. These results suggest that smooth target paths and trajectory-level credit
assignment are effective principles for on-policy distillation. Exploring adaptive or
theoretically optimized interpolation schedules, lower-variance return-to-go estimators, and
applications beyond mathematical reasoning are promising directions for future work.

%% file: sections/acknowledge.tex
\section{Acknowledgment}

Xu, Liu and Huang are supported by DARPA HR001124S0029-AIQ-FP-019, National Science Foundation TRAILS Institute (2229885). Private support was provided by Open Philanthropy and Apple. The Authors acknowledge the National Artificial Intelligence Research Resource (NAIRR) Pilot for contributing to this research result.

%% file: sections/appendix.tex
\newpage

\section{Notation Reference}
\label{app:notation}
\input{sections/notation_table}
\clearpage

\section{Theoretical Proofs}
\label{app:rtg-proofs}
\subsection{Proof of Proposition~\ref{prop:opsd_beta_one}}
\label{app:regularized_rl}
\begin{proof}
Fix a prompt $x$ and privileged context $c$. For readability, we suppress the conditioning
on $(x,c)$ when it is unambiguous. Expanding Eq.~\ref{eq:beta_opsd_objective}, we have
\begin{align}
    \mathcal{J}_{\beta}(\pi_\theta)
    &=
    \mathbb{E}_{y\sim\pi_\theta}
    \left[
    \log
    \frac{
    p_T(y)
    }{
    \pi_{\mathrm{ref}}(y)
    }
    \right]
    -
    \beta
    D_{\mathrm{KL}}
    \left(
    \pi_\theta
    \,\middle\|\,
    \pi_{\mathrm{ref}}
    \right) \\
    &=
    \mathbb{E}_{y\sim\pi_\theta}
    \left[
    \log p_T(y)
    -
    \log \pi_{\mathrm{ref}}(y)
    \right]
    -
    \beta
    \mathbb{E}_{y\sim\pi_\theta}
    \left[
    \log \pi_\theta(y)
    -
    \log \pi_{\mathrm{ref}}(y)
    \right] \\
    &=
    \mathbb{E}_{y\sim\pi_\theta}
    \left[
    \log p_T(y)
    -
    \beta \log \pi_\theta(y)
    +
    (\beta-1)\log \pi_{\mathrm{ref}}(y)
    \right].
    \label{eq:beta_opsd_expanded}
\end{align}
On the other hand,
\begin{align}
    &-
    D_{\mathrm{KL}}
    \left(
    \pi_\theta
    \,\middle\|\,
    p_T
    \right)
    -
    (\beta-1)
    D_{\mathrm{KL}}
    \left(
    \pi_\theta
    \,\middle\|\,
    \pi_{\mathrm{ref}}
    \right) \\
    &=
    -
    \mathbb{E}_{y\sim\pi_\theta}
    \left[
    \log \pi_\theta(y)
    -
    \log p_T(y)
    \right]
    -
    (\beta-1)
    \mathbb{E}_{y\sim\pi_\theta}
    \left[
    \log \pi_\theta(y)
    -
    \log \pi_{\mathrm{ref}}(y)
    \right] \\
    &=
    \mathbb{E}_{y\sim\pi_\theta}
    \left[
    \log p_T(y)
    -
    \beta \log \pi_\theta(y)
    +
    (\beta-1)\log \pi_{\mathrm{ref}}(y)
    \right].
\end{align}
Comparing this expression with Eq.~\ref{eq:beta_opsd_expanded}, we obtain the equivalent
decomposition
\begin{equation}
    \mathcal{J}_{\beta}(\pi_\theta)
    =
    -
    D_{\mathrm{KL}}
    \left(
    \pi_\theta(\cdot\mid x)
    \,\middle\|\,
    p_T(\cdot\mid x,c)
    \right)
    -
    (\beta-1)
    D_{\mathrm{KL}}
    \left(
    \pi_\theta(\cdot\mid x)
    \,\middle\|\,
    \pi_{\mathrm{ref}}(\cdot\mid x)
    \right).
    \label{eq:beta_opsd_decomposition}
\end{equation}
Setting $\beta=1$ cancels the reference-regularization term:
\begin{equation}
    \mathcal{J}_{1}(\pi_\theta)
    =
    -
    D_{\mathrm{KL}}
    \left(
    \pi_\theta(\cdot\mid x)
    \,\middle\|\,
    p_T(\cdot\mid x,c)
    \right).
\end{equation}
Therefore,
\begin{equation}
    \max_{\theta}\mathcal{J}_{1}(\pi_\theta)
    \quad
    \Longleftrightarrow
    \quad
    \min_{\theta}
    D_{\mathrm{KL}}
    \left(
    \pi_\theta(\cdot\mid x)
    \,\middle\|\,
    p_T(\cdot\mid x,c)
    \right),
\end{equation}
which is exactly the standard sequence-level OPSD objective in
Eq.~\ref{eq:standard_opsd_kl}. This proves the proposition.
\end{proof}

\subsection{Proof of Proposition~\ref{prop:geometric_optimal_policy}}
\label{app:geometric-optimal-policy}
\begin{proof}
The $\beta$-OPSD objective:
\begin{equation}
    \mathcal{J}_{\beta}(\pi)
    =
    \mathbb{E}_{y\sim\pi(\cdot\mid x)}
    \left[
    \log
    \frac{
    p_T(y\mid x,c)
    }{
    \pi_{\mathrm{ref}}(y\mid x)
    }
    \right]
    -
    \beta
    D_{\mathrm{KL}}
    \left(
    \pi(\cdot\mid x)
    \,\middle\|\,
    \pi_{\mathrm{ref}}(\cdot\mid x)
    \right).
\end{equation}
Expanding the KL term gives
\begin{align}
    \mathcal{J}_{\beta}(\pi)
    &=
    \mathbb{E}_{y\sim\pi}
    \left[
    \log p_T(y)
    -
    \log \pi_{\mathrm{ref}}(y)
    \right]
    -
    \beta
    \mathbb{E}_{y\sim\pi}
    \left[
    \log \pi(y)
    -
    \log \pi_{\mathrm{ref}}(y)
    \right] \\
    &=
    \mathbb{E}_{y\sim\pi}
    \left[
    \log p_T(y)
    -
    \beta \log \pi(y)
    +
    (\beta-1)\log \pi_{\mathrm{ref}}(y)
    \right].
    \label{eq:beta_opsd_expanded_for_optimal_policy}
\end{align}

Now define
\begin{equation}
    \pi_{\beta}^{\star}(y\mid x,c)
    =
    \frac{
    \pi_{\mathrm{ref}}(y\mid x)^{1-\frac{1}{\beta}}
    p_T(y\mid x,c)^{\frac{1}{\beta}}
    }{
    Z_{\beta}(x,c)
    },
\end{equation}
where
\begin{equation}
    Z_{\beta}(x,c)
    =
    \sum_{y'}
    \pi_{\mathrm{ref}}(y'\mid x)^{1-\frac{1}{\beta}}
    p_T(y'\mid x,c)^{\frac{1}{\beta}}.
\end{equation}
Taking logs, we have
\begin{equation}
    \log \pi_{\beta}^{\star}(y\mid x,c)
    =
    \left(1-\frac{1}{\beta}\right)
    \log \pi_{\mathrm{ref}}(y\mid x)
    +
    \frac{1}{\beta}
    \log p_T(y\mid x,c)
    -
    \log Z_{\beta}(x,c).
\end{equation}
Multiplying both sides by $\beta$ gives
\begin{equation}
    \beta \log \pi_{\beta}^{\star}(y\mid x,c)
    =
    (\beta-1)
    \log \pi_{\mathrm{ref}}(y\mid x)
    +
    \log p_T(y\mid x,c)
    -
    \beta \log Z_{\beta}(x,c).
    \label{eq:beta_log_optimal_policy}
\end{equation}
Substituting Eq.~\ref{eq:beta_log_optimal_policy} into
Eq.~\ref{eq:beta_opsd_expanded_for_optimal_policy}, we obtain
\begin{align}
    \mathcal{J}_{\beta}(\pi)
    &=
    \mathbb{E}_{y\sim\pi}
    \left[
    \beta \log \pi_{\beta}^{\star}(y\mid x,c)
    +
    \beta \log Z_{\beta}(x,c)
    -
    \beta \log \pi(y\mid x)
    \right] \\
    &=
    -\beta
    \mathbb{E}_{y\sim\pi}
    \left[
    \log
    \frac{
    \pi(y\mid x)
    }{
    \pi_{\beta}^{\star}(y\mid x,c)
    }
    \right]
    +
    \beta \log Z_{\beta}(x,c) \\
    &=
    -\beta
    D_{\mathrm{KL}}
    \left(
    \pi(\cdot\mid x)
    \,\middle\|\,
    \pi_{\beta}^{\star}(\cdot\mid x,c)
    \right)
    +
    \beta \log Z_{\beta}(x,c).
    \label{eq:beta_opsd_as_negative_kl}
\end{align}
The second term is independent of $\pi$, while the KL term is nonnegative and is minimized
to zero if and only if
\begin{equation}
    \pi(\cdot\mid x)
    =
    \pi_{\beta}^{\star}(\cdot\mid x,c).
\end{equation}
Therefore, $\pi_{\beta}^{\star}$ is the maximizer of
Eq.~\ref{eq:beta_opsd_objective}. This proves the proposition.
\end{proof}

\subsection{Proof of the Return-to-Go Gradient Estimator}
\label{app:onpolicy-lookahead-gradient}
\begin{proof}
We prove that when $\gamma=1$, the return-to-go estimator in
Eq.~\ref{eq:slide_return_to_go} gives an unbiased estimator of the sequence-level
reverse-KL gradient.

\begin{proposition}[Unbiased sequence-level gradient estimator]
\label{prop:rtg_unbiased_beta_opsd}
Fix a training step $k$ and treat the interpolant target
$\tilde p_{\beta_k}(\cdot\mid x,c)$ as fixed during the current gradient update. Define
\begin{equation}
    \mathcal{L}_{\beta_k}(\theta)
    =
    D_{\mathrm{KL}}
    \left(
    \pi_\theta(\cdot\mid x)
    \,\middle\|\,
    \tilde p_{\beta_k}(\cdot\mid x,c)
    \right),
\end{equation}
and let
\begin{equation}
    \rho_t^{\beta_k}(y)
    =
    \log \pi_\theta(y_t\mid x,y_{<t})
    -
    \log \tilde p_{\beta_k}(y_t\mid x,y_{<t},c).
\end{equation}
For $\gamma=1$, define
\begin{equation}
    G_t^{\beta_k}(y)
    =
    \sum_{s=t}^{T}
    \rho_s^{\beta_k}(y).
\end{equation}
Then
\begin{equation}
    \widehat{g}_{\mathrm{RTG}}(y)
    =
    \sum_{t=1}^{T}
    G_t^{\beta_k}(y)
    \nabla_\theta
    \log \pi_\theta(y_t\mid x,y_{<t})
\end{equation}
satisfies
\begin{equation}
    \mathbb{E}_{y\sim \pi_\theta(\cdot\mid x)}
    \left[
    \widehat{g}_{\mathrm{RTG}}(y)
    \right]
    =
    \nabla_\theta
    \mathcal{L}_{\beta_k}(\theta).
\end{equation}
\end{proposition}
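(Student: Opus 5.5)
We differentiate the sequence-level reverse KL directly and apply the score-function identity. Along the way we use two facts: the expected score is zero, and past rewards decouple from future scores. Throughout, fix $x$ and $c$, and treat $\tilde p_{\beta_k}$ as a constant in $\theta$. For a finite horizon, the expectation is a finite sum, so differentiation and expectation can be exchanged.

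\emph{Step 1 (score-function form).} Write $\mathcal{L}_{\beta_k}(\theta)=\sum_y \pi_\theta(y\mid x)\,f_\theta(y)$, where
\begin{equation}
f_\theta(y)=\log\pi_\theta(y\mid x)-\log\tilde p_{\beta_k}(y\mid x,c)=\sum_{t=1}^T\rho_t^{\beta_k}(y),
\end{equation}
using the autoregressive factorizations of both $\pi_\theta$ and Eq.~\ref{eq:sequence_logit_interpolant}. The product rule gives
\begin{equation}
\nabla_\theta\mathcal{L}_{\beta_k}=\mathbb{E}_{y\sim\pi_\theta}\bigl[f_\theta(y)\nabla_\theta\log\pi_\theta(y\mid x)\bigr]+\mathbb{E}_{y\sim\pi_\theta}\bigl[\nabla_\theta\log\pi_\theta(y\mid x)\bigr].
\end{equation}
The second term is $\nabla_\theta\sum_y\pi_\theta(y\mid x)=\nabla_\theta 1=0$.

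\emph{Step 2 (expand and drop past terms).} Substitute $\nabla_\theta\log\pi_\theta(y\mid x)=\sum_t\nabla_\theta\log\pi_\theta(y_t\mid x,y_{<t})$ into the first term. This gives a double sum $\sum_{t}\sum_{s}\mathbb{E}[\rho_s\,\nabla\log\pi_\theta(y_t\mid h_t)]$. For $s<t$, the factor $\rho_s$ is a function of $y_{\le s}$ only, and hence of $h_t=(x,y_{<t})$. Condition on $h_t$ and use the per-step identity
\begin{equation}
\mathbb{E}_{y_t\sim\pi_\theta(\cdot\mid h_t)}\bigl[\nabla_\theta\log\pi_\theta(y_t\mid h_t)\bigr]=0.
\end{equation}
Every term with $s<t$ then vanishes, by the tower property. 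What remains is
\begin{equation}
\sum_t\mathbb{E}\Bigl[\Bigl(\sum_{s\ge t}\rho_s\Bigr)\nabla\log\pi_\theta(y_t\mid h_t)\Bigr]=\mathbb{E}\bigl[\widehat g_{\mathrm{RTG}}(y)\bigr].
\end{equation}

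\emph{Main subtlety.} The delicate point is how $\rho_s$ depends on $\theta$. In Step 1 the gradient of $f_\theta$ inside the expectation has to be accounted for. It contributes exactly the zero-mean score term, and it must not be double counted when $G_t$ is treated as detached, as the $\operatorname{sg}$ in Eq.~\ref{eq:slide_loss} does.

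Handling a variable length $T$ also needs care. I would treat EOS as an ordinary token, with the interpolant target fixed on each prefix. The conditioning argument then applies at each step with $T$ a stopping time, or one can pad to a maximal horizon with $\rho\equiv0$ after termination.

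If $\tilde p_{\beta_k}$ is built from $\bar\theta$, the stop-gradient makes it constant, as the statement assumes. Consistency with $\operatorname{sg}$ is therefore exactly what justifies the fixed-target hypothesis.
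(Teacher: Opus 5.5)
Your proof is correct and follows essentially the same route as the paper. Both arguments use the score-function identity, note that $\mathbb{E}_{y\sim\pi_\theta}[\nabla_\theta f_\theta(y)]$ is an expected score and hence vanishes because the target is fixed, and then remove the $s<t$ cross terms by causality via $\mathbb{E}[\nabla_\theta\log\pi_\theta(y_t\mid h_t)\mid h_t]=0$; your extra remark on variable-length trajectories (padding after EOS, where the score terms must also be zero) covers a detail the paper leaves implicit.
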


\begin{proof}
For readability, fix $x,c,\beta_k$ and suppress them when clear. By the autoregressive
factorization,
\begin{equation}
    \pi_\theta(y\mid x)
    =
    \prod_{t=1}^{T}
    \pi_\theta(y_t\mid x,y_{<t}),
    \qquad
    \tilde p_{\beta_k}(y\mid x,c)
    =
    \prod_{t=1}^{T}
    \tilde p_{\beta_k}(y_t\mid x,y_{<t},c).
\end{equation}
Therefore,
\begin{equation}
    \mathcal{L}_{\beta_k}(\theta)
    =
    \mathbb{E}_{y\sim\pi_\theta(\cdot\mid x)}
    \left[
    \sum_{t=1}^{T}
    \rho_t^{\beta_k}(y)
    \right].
\end{equation}
Let
\begin{equation}
    R^{\beta_k}(y)
    :=
    \sum_{t=1}^{T}
    \rho_t^{\beta_k}(y),
    \qquad
    g_t(y)
    :=
    \nabla_\theta
    \log \pi_\theta(y_t\mid x,y_{<t}).
\end{equation}
Then
\begin{equation}
    \mathcal{L}_{\beta_k}(\theta)
    =
    \mathbb{E}_{y\sim\pi_\theta}
    \left[
    R^{\beta_k}(y)
    \right].
\end{equation}
Differentiating with the score-function identity gives
\begin{align}
    \nabla_\theta \mathcal{L}_{\beta_k}(\theta)
    &=
    \mathbb{E}_{y\sim\pi_\theta}
    \left[
    R^{\beta_k}(y)
    \nabla_\theta \log \pi_\theta(y\mid x)
    \right]
    +
    \mathbb{E}_{y\sim\pi_\theta}
    \left[
    \nabla_\theta R^{\beta_k}(y)
    \right].
\end{align}
Since
\begin{equation}
    \nabla_\theta \log \pi_\theta(y\mid x)
    =
    \sum_{t=1}^{T}
    g_t(y),
\end{equation}
and the interpolant target $\tilde p_{\beta_k}$ is fixed during the update,
\begin{equation}
    \nabla_\theta R^{\beta_k}(y)
    =
    \sum_{t=1}^{T}
    g_t(y).
\end{equation}
The expected score is zero:
\begin{equation}
    \mathbb{E}_{y\sim\pi_\theta}
    \left[
    \sum_{t=1}^{T}
    g_t(y)
    \right]
    =
    0.
\end{equation}
Thus,
\begin{align}
    \nabla_\theta \mathcal{L}_{\beta_k}(\theta)
    &=
    \mathbb{E}_{y\sim\pi_\theta}
    \left[
    R^{\beta_k}(y)
    \sum_{t=1}^{T}
    g_t(y)
    \right] \\
    &=
    \mathbb{E}_{y\sim\pi_\theta}
    \left[
    \sum_{t=1}^{T}
    g_t(y)
    \sum_{s=1}^{T}
    \rho_s^{\beta_k}(y)
    \right].
    \label{eq:full_return_gradient_beta}
\end{align}

We now remove the past terms by causality. For $s<t$, the quantity
$\rho_s^{\beta_k}(y)$ depends only on earlier tokens and is fixed after conditioning on the
prefix $(x,y_{<t})$. Meanwhile,
\begin{align}
    \mathbb{E}_{y_t\sim\pi_\theta(\cdot\mid x,y_{<t})}
    \left[
    g_t(y)
    \mid x,y_{<t}
    \right]
    &=
    \sum_{v\in\mathcal{V}}
    \pi_\theta(v\mid x,y_{<t})
    \nabla_\theta
    \log \pi_\theta(v\mid x,y_{<t}) \\
    &=
    \sum_{v\in\mathcal{V}}
    \nabla_\theta
    \pi_\theta(v\mid x,y_{<t}) \\
    &=
    \nabla_\theta
    \sum_{v\in\mathcal{V}}
    \pi_\theta(v\mid x,y_{<t}) \\
    &=
    0.
\end{align}
Therefore,
\begin{equation}
    \mathbb{E}_{y\sim\pi_\theta}
    \left[
    g_t(y)
    \sum_{s<t}
    \rho_s^{\beta_k}(y)
    \right]
    =
    0.
\end{equation}
Applying this to Eq.~\ref{eq:full_return_gradient_beta}, only the current and future terms
remain:
\begin{align}
    \nabla_\theta \mathcal{L}_{\beta_k}(\theta)
    &=
    \mathbb{E}_{y\sim\pi_\theta}
    \left[
    \sum_{t=1}^{T}
    g_t(y)
    \sum_{s=t}^{T}
    \rho_s^{\beta_k}(y)
    \right] \\
    &=
    \mathbb{E}_{y\sim\pi_\theta}
    \left[
    \sum_{t=1}^{T}
    G_t^{\beta_k}(y)
    g_t(y)
    \right].
\end{align}
This proves that the return-to-go estimator is unbiased for the sequence-level gradient.
\end{proof}

Finally, the practical surrogate in Eq.~\ref{eq:slide_loss} with $\gamma=1$ has gradient
\begin{equation}
    \nabla_\theta \mathcal{L}_{\ours{}}(\theta;y)
    =
    \frac{1}{T}
    \sum_{t=1}^{T}
    G_t^{\beta_k}(y)
    \nabla_\theta
    \log \pi_\theta(y_t\mid x,y_{<t}),
\end{equation}
where the return weights are detached. Hence, its expectation equals
$\frac{1}{T}\nabla_\theta \mathcal{L}_{\beta_k}(\theta)$. The factor $1/T$ is only a
normalization constant. For $\gamma<1$, the same loss becomes a discounted practical
approximation that reduces the magnitude and variance of long-horizon returns, but the
exact unbiasedness statement corresponds to $\gamma=1$.
\end{proof}

\section{Experimental Details}
\label{sec:exp_details}
\subsection{Training Details}
\label{sec:training_detail}
\paragraph{Shared training configuration.}
Vanilla OPSD and \ours{} use the same model, optimization settings, rollout configuration,
and computational budget. We train Qwen3-1.7B on four NVIDIA RTX A6000 GPUs using
four-way data parallelism with \texttt{accelerate}. Each training rank runs a colocated
single-GPU vLLM instance for on-policy generation. Training is run for a maximum of
$200$ optimization steps with an eight-hour wall-time limit.

\paragraph{Optimization and parameter-efficient tuning.}
We use a learning rate of $5\times10^{-6}$ and clip the gradient norm at $0.1$.
The per-device batch size is $1$, with $8$ gradient-accumulation steps, giving an effective
batch size of
\begin{equation}
    4\ \text{GPUs}
    \times 1\ \text{sample per GPU}
    \times 8\ \text{accumulation steps}
    = 32.
\end{equation}
Gradient checkpointing is enabled. All models are trained in bfloat16 precision using
FlashAttention-2. We use LoRA with rank $r=64$ and scaling parameter
$\alpha_{\mathrm{LoRA}}=128$. LoRA adapters are applied to
\texttt{q\_proj}, \texttt{k\_proj}, \texttt{v\_proj}, \texttt{o\_proj},
\texttt{gate\_proj}, \texttt{up\_proj}, and \texttt{down\_proj}.

\paragraph{On-policy generation.}
Training trajectories are generated with a maximum sequence length of $20{,}000$ tokens
and a maximum completion length of $1{,}024$ tokens. We sample with temperature $1.1$,
top-$p=0.95$, and top-$k=20$. Generation uses vLLM in colocated mode with tensor-parallel
size $1$ and GPU-memory utilization set to $0.5$. Unless otherwise stated, these settings
are shared by Vanilla OPSD and \ours{}, ensuring that their comparison isolates the effect of
the distillation objective and Look-Ahead credit assignment.

\subsection{Evaluation Details}
\label{sec:eval_detail}
\paragraph{Evaluation pipeline.}
We evaluate the Qwen3-1.7B, Qwen3-4B and Qwen3-8B checkpoints using their corresponding
LoRA adapters loaded on top of the original Qwen3 base models.
We evaluate on AIME 2024, AIME 2025, and HMMT 2025.

\paragraph{Decoding configuration.}
We use stochastic decoding with thinking mode enabled. For each problem, we generate
$k=12$ independent candidate solutions using temperature $0.6$, top-$p$ sampling with
$p=0.95$, and top-$k=50$. The evaluation batch size is $32$. We set
\texttt{max\_new\_tokens=0}, corresponding to no separate completion-length cap in the
evaluation script, and use the default maximum model length of $40{,}960$ tokens in
thinking mode. Unless explicitly changed for an ablation, all reported results use this
same decoding configuration.

\paragraph{Scoring and metrics.}
We report
\emph{pass@12}, the fraction of problems for which at least one of the 12 sampled
solutions is correct, and \emph{avg@12}, the average correctness across all 12 sampled
solutions. The evaluation code returns both metrics as raw fractions; throughout the paper,
we multiply them by $100$ and report percentage points.

\paragraph{Inference configuration.}
Evaluation uses tensor-parallel size $1$, GPU memory utilization $0.9$, and a single
evaluation repeat. LoRA checkpoints are evaluated without modifying the underlying base
model weights. Checkpoint sweeps evaluate training steps
$\{50,75,100,200\}$, with one Slurm evaluation job submitted for each
checkpoint--dataset pair.

\section{Sampling from Logit Interpolants}
\label{app:proposal_mixture}
In addition to student on-policy sampling, we evaluate a guided sampling variant that
constructs trajectories from a mixture of the student and privileged teacher. For proposal-mixture weight $\eta\in[0,1]$, we define the token-level proposal distribution
\begin{equation}
    m_{\bar{\theta},\eta}(\cdot\mid h_t,c)
    =
    (1-\eta)\,
    \pi_{\bar{\theta}}(\cdot\mid h_t)
    +
    \eta\,
    p_T(\cdot\mid h_t,c),
    \label{eq:student_teacher_mixed_sampler}
\end{equation}
where $\pi_{\bar{\theta}}$ is a stop-gradient copy of the current student. Thus,
$\eta=0$ recovers student on-policy sampling, whereas $\eta=1$ samples entirely from
the privileged teacher. The resulting autoregressive proposal is
\begin{equation}
    m_{\bar{\theta},\eta}(y\mid x,c)
    =
    \prod_{t=1}^{T}
    m_{\bar{\theta},\eta}(y_t\mid x,y_{<t},c).
    \label{eq:student_teacher_mixed_sequence}
\end{equation}

Because the objective in Eq.~\ref{eq:standard_opsd_kl} is defined under the student trajectory
distribution $\pi_\theta$, sampling from $m_{\bar{\theta},\eta}$ introduces a distribution
mismatch. We correct this mismatch using per-decision importance sampling. For a trajectory
$y\sim m_{\bar{\theta},\eta}(\cdot\mid x,c)$, we define
\begin{equation}
    w_t(y)
    =
    \frac{
    \pi_\theta(y_t\mid x,y_{<t})
    }{
    m_{\bar{\theta},\eta}(y_t\mid x,y_{<t},c)
    },
    \qquad
    W_t(y)
    =
    \prod_{i=1}^{t} w_i(y).
    \label{eq:mixed_sampling_importance_ratio}
\end{equation}
The importance-weighted Look-Ahead return is
\begin{equation}
    G_{t,\gamma}^{\eta,\mathrm{mix}}(y)
    =
    \sum_{s=t}^{T}
    \gamma^{s-t}
    W_s(y)\rho_s^\tau(y),
    \label{eq:mixed_sampling_lookahead_return}
\end{equation}
where $\rho_s^\tau$ measures the student mismatch with the logit-interpolant target. We optimize the corresponding
surrogate
\begin{equation}
    \mathcal{L}_{\mathrm{mix\text{-}\ours}}(\theta;y)
    =
    \frac{1}{T}
    \sum_{t=1}^{T}
    \operatorname{sg}
    \left(
    G_{t,\gamma}^{\tau,\mathrm{mix}}(y)
    \right)
    \log \pi_\theta(y_t\mid x,y_{<t}).
    \label{eq:mixed_sampling_lad_loss}
\end{equation}
We note that this variant is inherently off-policy and requires combining the student and
teacher distributions at every decoding step. At the time of writing, such dual-model sampling is \textbf{not} directly supported by standard vLLM decoding pipelines, resulting in additional (nearly double)
implementation complexity and generation overhead. Although mixed student--teacher
sampling yields comparable improvements over vanilla OPSD, its reduced
sampling efficiency makes it less attractive in practice. 

\paragraph{Results}
We evaluate fixed proposal-mixture weights
$\eta\in\{0.2,0.5,0.8\}$ and linear schedules that either increase or decrease $\eta$
during training. Here, $\eta$ denotes the teacher weight in
Eq.~\ref{eq:student_teacher_mixed_sampler}, while $1-\eta$ denotes the student weight.
All mixed-sampling variants use \ours{} with discount factor $\gamma=0.99$ and
per-decision importance sampling (IS).

\input{tables/app_mix_sampling}

Table~\ref{tab:mixed_sampling_results} shows that mixed student--teacher sampling can
produce improvements comparable to those of the main on-policy \ours{} method. At 200
steps, every evaluated schedule improves avg@12 over Vanilla OPSD on both AIME
benchmarks. Across all checkpoints, the best observed gains over Vanilla OPSD are $8.06$
points on AIME 2024, $5.55$ points on AIME 2025, and $3.06$ points on HMMT 2025.
However, no single proposal schedule dominates across all benchmarks, indicating that the
off-policy variant is sensitive to both the mixture coefficient and the training checkpoint.
Given its additional dual-model decoding cost and the variance introduced by importance
sampling, we use student on-policy sampling as the default implementation of \ours{}.

%% file: sections/notation_table.tex
\begin{table}[hbt]
\centering
\small
\caption{
Summary of notation. For readability, we omit conditioning variables such as $(x,c)$
when they are clear from context. Throughout the paper, $\pi_\theta$ denotes the student
policy, $\pi_{\mathrm{ref}}$ denotes the reference policy, and $p_T$ denotes the privileged
teacher.
}
\label{tab:notation}
\begin{tabular}{p{0.28\linewidth}p{0.66\linewidth}}
\toprule
\textbf{Notation} & \textbf{Description} \\
\midrule

$x$ & Input prompt. \\
$c$ & Privileged context available only during training. \\
$y=(y_1,\ldots,y_T)$ & Generated reasoning trajectory. \\
$y_t$ & Token at decoding step $t$. \\
$y_{<t}$ & Prefix before token $t$. \\
$h_t=(x,y_{<t})$ & Decoding history or prefix state at step $t$. \\

\midrule

$\pi_\theta(y\mid x)$ & Student policy parameterized by $\theta$. \\
$p_T(y\mid x,c)$ & Privileged teacher policy conditioned on prompt $x$ and context $c$. \\
$\pi_{\mathrm{ref}}(y\mid x)$ & Reference policy, e.g., the initial student or a stop-gradient copy. \\
$\bar{\theta}$ & Stop-gradient copy of the current student parameters. \\

\midrule

$R(y;x,c)$ & Trajectory-level reward in the KL-regularized RL formulation. \\
$\mathcal{J}_{\beta}(\theta)$ & $\beta$-OPSD objective, obtained by using the teacher-to-reference log-ratio reward. \\
$\beta$ & KL regularization coefficient in $\beta$-OPSD. \\
$\beta_k$ & Regularization coefficient used at training step $k$. \\
$\pi_{\beta}^{\star}(y\mid x,c)$ & Closed-form optimal policy of the $\beta$-OPSD objective. \\

\midrule

$z_{\mathrm{ref}}(\cdot\mid h_t)$ & Reference/student logits at prefix $h_t$. \\
$z_T(\cdot\mid h_t,c)$ & Privileged teacher logits at prefix $h_t$. \\
$\tilde p_{\beta_k}(\cdot\mid h_t,c)$ & Local logit-interpolant target at step $k$. \\
$\tilde p_{\beta_k}(y\mid x,c)$ & Autoregressive sequence-level logit-interpolant target. \\
$s_k=\frac{k}{K-1}$ & Normalized training progress. \\
$w_{k}$ & Implementation notation for the teacher weight. \\

\midrule

$\rho_t^{\beta_k}(y)$ & Token-level log-ratio between the student and interpolant target. \\
$G_{t,\gamma}^{\beta_k}(y)$ & Discounted return-to-go from token $t$ using future interpolant log-ratios. \\
$\gamma$ & Discount factor for the return-to-go. \\
$\operatorname{sg}(\cdot)$ & Stop-gradient operator. \\

\midrule

$D_{\mathrm{KL}}(\cdot\|\cdot)$ & Kullback--Leibler divergence. \\
$\mathbb{E}_{y\sim\pi_\theta}$ & Expectation over trajectories sampled from the student policy. \\

\bottomrule
\end{tabular}
\end{table}

%% file: tables/app_mix_sampling.tex
\begin{table*}[htb]
\centering
\scriptsize
\caption{
Mixed student--teacher sampling results. All non-baseline variants use
Sampling from Logit Interpolants with discount factor $\gamma=0.99$ and
importance sampling (IS). Here, $\eta$ denotes the teacher weight in the
proposal distribution. We report pass@12 and avg@12 in percentage points.
}
\label{tab:mixed_sampling_results}
\setlength{\tabcolsep}{4pt}
\renewcommand{\arraystretch}{1.06}

\begin{tabular}{lccccccc}
\toprule
Proposal schedule
& Step
& \multicolumn{2}{c}{AIME 2024}
& \multicolumn{2}{c}{AIME 2025}
& \multicolumn{2}{c}{HMMT 2025} \\
\cmidrule(lr){3-4}
\cmidrule(lr){5-6}
\cmidrule(lr){7-8}
&
& pass@12
& avg@12
& pass@12
& avg@12
& pass@12
& avg@12 \\
\midrule

Vanilla OPSD
& 200
& 76.67
& 46.11
& 63.33
& 35.56
& 43.33
& 15.83 \\

\midrule
\multicolumn{8}{l}{\textit{Fixed proposal: $\eta=0.2$}} \\
\quad Mixed Sampling
& 50
& 76.67
& 50.83
& 60.00
& 37.50
& 40.00
& 15.83 \\
\quad Mixed Sampling
& 75
& 80.00
& 50.00
& 66.67
& 38.06
& 40.00
& 14.44 \\
\quad Mixed Sampling
& 100
& 76.67
& 52.78
& 53.33
& 35.83
& 33.33
& 13.61 \\
\quad Mixed Sampling
& 200
& 73.33
& 51.94
& 63.33
& 37.22
& 43.33
& 16.11 \\

\midrule
\multicolumn{8}{l}{\textit{Fixed proposal: $\eta=0.5$}} \\
\quad Mixed Sampling
& 50
& 80.00
& 50.83
& 66.67
& 36.39
& 43.33
& 16.11 \\
\quad Mixed Sampling
& 75
& 80.00
& 52.22
& 66.67
& 37.78
& 46.67
& 16.11 \\
\quad Mixed Sampling
& 100
& 76.67
& \textbf{54.17}
& 63.33
& 39.44
& 46.67
& 17.22 \\
\quad Mixed Sampling
& 200
& 76.67
& 53.33
& 63.33
& 38.06
& 36.67
& 15.83 \\

\midrule
\multicolumn{8}{l}{\textit{Fixed proposal: $\eta=0.8$}} \\
\quad Mixed Sampling
& 50
& 80.00
& 50.83
& 60.00
& 36.94
& 43.33
& 15.83 \\
\quad Mixed Sampling
& 75
& 80.00
& 51.39
& 70.00
& \textbf{41.11}
& 46.67
& 16.11 \\
\quad Mixed Sampling
& 100
& 73.33
& 51.94
& 60.00
& 36.11
& 40.00
& 15.83 \\
\quad Mixed Sampling
& 200
& 73.33
& 51.39
& 66.67
& 38.61
& 36.67
& 13.89 \\

\midrule
\multicolumn{8}{l}{
\textit{Linear proposal schedule: $\eta:0.2\rightarrow0.8$}
} \\
\quad Mixed Sampling
& 50
& 80.00
& 50.83
& 70.00
& 38.06
& 36.67
& 13.61 \\
\quad Mixed Sampling
& 75
& 76.67
& 50.56
& 70.00
& 36.39
& \textbf{53.33}
& 18.06 \\
\quad Mixed Sampling
& 100
& 73.33
& 51.94
& 66.67
& 36.67
& 36.67
& 16.39 \\
\quad Mixed Sampling
& 200
& 76.67
& 50.56
& 60.00
& \textbf{41.11}
& 43.33
& 16.94 \\

\midrule
\multicolumn{8}{l}{
\textit{Linear proposal schedule: $\eta:0.8\rightarrow0.2$}
} \\
\quad Mixed Sampling
& 50
& 76.67
& 50.56
& \textbf{76.67}
& 36.39
& 30.00
& 13.61 \\
\quad Mixed Sampling
& 75
& 73.33
& 47.78
& 73.33
& 36.39
& 50.00
& \textbf{18.89} \\
\quad Mixed Sampling
& 100
& 80.00
& 50.56
& 60.00
& 38.61
& 36.67
& 15.83 \\
\quad Mixed Sampling
& 200
& 76.67
& 52.78
& 60.00
& 39.72
& 36.67
& 15.56 \\

\bottomrule
\end{tabular}
\end{table*}